%% file: main.tex
\documentclass{article}

\usepackage[preprint]{neurips_2026}

\usepackage[utf8]{inputenc} 
\usepackage[T1]{fontenc}    
\usepackage{hyperref}       
\usepackage{url}            
\usepackage{booktabs}       
\usepackage{amsfonts}       
\usepackage{nicefrac}       
\usepackage{microtype}      
\usepackage{xcolor}         
\usepackage{placeins}
\usepackage{amsmath}
\usepackage{amsthm}
\usepackage{graphicx}
\usepackage{subcaption}
\usepackage[capitalise,noabbrev]{cleveref}
\usepackage{multirow}

\newtheorem{lemma}{Lemma}

\title{Learned Lagrangian Models of PDEs \\via Euler--Lagrange Residual Minimization}
\author{Lyra Zhornyak \\
  University of Pennsylvania \\
  \texttt{\small zhornyak@seas.upenn.edu}  \And
  Eric Forgoston \\
  Montclair State University \\
  \texttt{\small eric.forgoston@montclair.edu}
  \And
  M. Ani Hsieh \\
  University of Pennsylvania \\
  \texttt{\small mya@seas.upenn.edu}}
\date{February 2026}

\begin{document}

\maketitle

\begin{abstract}
We present the first method to directly use a learned continuous Lagrangian to forecast the dynamics of systems governed by partial differential equations, exploiting the inherent conservative structure to achieve stable long-range predictions. 
We develop an optimization-based integrator that minimizes the squared Euler--Lagrange residual via a mesh-free near-symplectic construction on local space-time patches.  
Different from integrators for analytical models, integrators for learned models should decouple model error (phase error) from integration error (conservation error).
By relying on optimization rather than time-stepping, we bypass the global coupling inherent to fixed discretizations, which slows time- and space-stepping and complicates learning. 
Our method scales linearly with domain size via Jacobi iteration, and places no structural requirements on the learned network, allowing it to be coupled with existing physics-guided machine learning (ML) methods.
We validate our approach on a learned representation of a double pendulum, a one-dimensional wave equation, and a two-dimensional wave equation.  
Our method achieves error comparable to classical symplectic methods while generalizing to spatially varying dynamics and arbitrary boundary conditions without retraining.
\end{abstract}

\input{intro}
\input{related}
\input{methods}
\input{experiments}
\input{conclusion}

\FloatBarrier
\newpage
\bibliography{references}

\FloatBarrier
\newpage
\input{appendix}


\end{document}

%% file: intro.tex
\section{Introduction}
Dynamical systems are mathematical models that describe how a system evolves over time, typically specified by a state space and a set of rules governing how the states change over time and space.  
Estimating dynamical systems models from data is challenging because the dynamics are often nonlinear and high-dimensional, making it difficult to identify both the model structure and parameters from noisy, sparse, or partially observed time series measurements of system states.
Nevertheless, modern approaches like physics-informed/guided or structure preserving machine learning have significantly improved our ability to model and capture complex dynamics from data~\citep{thuerey2021physics}.
In recent years, neural networks have successfully been used to learn Hamiltonians~\citep{greydanus2019hamiltonian}, Lagrangians~\citep{cranmer2020lagrangian, lutter2019deep}, and dynamics directly from data~\citep{sanchez2020learning, chen2018neural}. 
Yet while these models capture the underlying physics with increasing accuracy, strikingly little attention is paid to how these models are \emph{used} for tasks where forward integration plays a central role, \emph{e.g.}, forward simulation, prediction and forecasting, estimation and filtering.

This oversight matters because forecasting a system's behavior places different demands on the learned model than finding a solution does.
While these two uses are closely related --- knowing how to produce a good local solution does help forecasting --- their objectives are distinct: \emph{solving} demands local accuracy and sensitivity to the equations, while \emph{forecasting} demands stability and bounded long-term error. 
For systems that conserve energy (waves, vibrations, orbits, etc.), this distinction is particularly important as even small per-step errors compound over long simulations, eventually producing solutions that violate conservation laws and become physically meaningless.

Existing work has predominantly focused on producing better and better models of the dynamics, implicitly assuming that forecasting errors can be overcome by either more training or specialized architectures.
This assumption fails most severely for chaotic systems and systems defined by partial differential equations (PDEs).
Accumulated errors can violate the structure of the system, pushing a trajectory outside of the stable, learned regime, thereby causing a catastrophic failure in the prediction.
Classical numerical analysis has long known that preserving structure can matter as much as per-step accuracy.
For Hamiltonian and Lagrangian systems, symplectic integrators trade local fidelity for energy conservation and long-term stability~\citep{reich1999backward, marsden2001discrete}.
Despite this, most learned-physics methods treat the integrator as an implementation detail.

Motivated by this gap, we introduce Euler--Lagrange Minimization (ELM), the first variational integrator designed for learned continuous Lagrangian models of PDEs. 
Prior work on learned variational integrators has been limited to finite-dimensional systems~\citep{saemundsson2020variational}.
ELM extends to multidimensional field theories by evaluating the Euler--Lagrange residual on Hermite-interpolated space-time patches and minimizing the residual locally via Newton iteration.
The local construction maintains linear scaling with domain size by avoiding a coupled, global solution process, and the optimization-based framing naturally incorporates boundary conditions through constraints on the optimized variables. 
When the residual is driven to zero, the integrator becomes symplectic, inheriting the long-term energy conservation of classical variational integrators.
Beyond twice-differentiability of the learned Lagrangian, ELM imposes no structural constraints.

We validate and demonstrate the performance of ELM on three systems of increasing dynamical complexity: a chaotic double pendulum, where ELM achieves energy drift comparable to state-of-the-art symplectic methods; the one-dimensional (1D) wave equation, where a learned Lagrangian density outperforms Lagrangian Neural Networks~\citep{cranmer2020lagrangian} in both accuracy and compute time; and the two-dimensional (2D) wave equation, where (to the best of our knowledge) ELM provides the first forward simulation of a 2D PDE using a learned continuous Lagrangian density.
By not preemptively discretizing space, ELM allows models trained on homogeneous domains to produce, without modification, physically correct reflection, transmission, and interference effects for spatially varying dynamics and new boundary conditions.

We provide a summary of related work in \cref{sec:related} and describe our methodology and conditions for symplecticity in \Cref{sec:methods}.
\Cref{sec:results} demonstrates the near-symplectic behavior on all three systems and we conclude with a discussion on limitations and future directions in \Cref{sec:conclusion}.

%% file: related.tex
\section{Related Work}\label{sec:related}

\paragraph{Operator Methods for Differential Equations}
The most conceptually straightforward method for learning a dynamical system might be directly modelling the underlying equations.
Physics-Informed Neural Networks \citep{raissi2019physics}, for example, adds the residual of the governing equation applied to the network outputs as a loss term.
However, this approach often requires strong prior knowledge on the function being learned to ensure training is successful \citep{krishnapriyan2021characterizing}.
To bypass these limitations and to enable more general models, one can alternatively learn a map to an operator space where solutions are more easily found.
Some examples include DeepONet \citep{lu2021deeponet}, which uses a branch-trunk architecture to map the input and output to a common space; Fourier Neural Operators \citep{li2021fourier}, which learns operators using truncated Fourier modes to achieve resolution invariance; and other physics-informed operator learning variants such as PI-DeepONet \citep{wang2021pideeponet} and PINO \citep{li2024pino}.
Operator-based methods rely on a strict definition of the domain and are highly structured.
To learn local physical laws and allow for arbitrary domains, graph neural network approaches such as Graph Network Simulators \citep{sanchez2020learning} and MeshGraphNets \citep{pfaff2021learning} have successfully been shown to represent classical numerical schemes such as finite-differences \citep{brandstetter2022message}.
While these methods are effective on a wide variety of problems, none of them encode or preserve the conservation laws of the systems they model.
As such, even small per-step errors in integration will accumulate and eventually violate fundamental conservation laws.
Post-hoc remedies such as filtering can suppress some high-frequency error \citep{lippe2023pderefiner}, but do not address the structural cause of the drift.
This limitation is not just architectural; these methods implicitly fuse the dynamics with their integrator, motivating efforts to instead learn the variational structure from which the dynamics arise.

\paragraph{Structure-Preserving Learning}
Variational mechanics, by contrast, considers the whole trajectory rather than any individual point.
Conserved quantities thus manifest as structural symmetries of the governing functions, \emph{e.g.} a Lagrangian or a Hamiltonian.
These functions can either be learned directly, as in Hamiltonian \citep{greydanus2019hamiltonian} and Lagrangian Neural Networks (LNN) \citep{cranmer2020lagrangian}, or used as the basis for designing the structure of the network, as in DeLaN \citep{lutter2019deep} and constrained formulations \citep{finzi2020simplifying} (see \citet{yu2024learning} for a broader survey).
In most cases, the method to produce long-horizon forecasts, \emph{i.e.} the integrator, is barely considered \citep{chen2020symplectic}.
Existing works that explicitly preserve structure apply only to systems governed by ODEs \citep{saemundsson2020variational} or rely on fixed spatial discretizations.
\citet{offen2024learning} learn a discrete Lagrangian for a prescribed discretization which leads to an implicit update scheme that may couple all nodes into a single global solve.
This is computationally expensive and can be numerically unstable, limiting this method to only 1D PDEs.
By assuming that the grid is regular, FS-HNN \citep{li2026frequency} extends structure-preserving approaches to two dimensions by combining a learned Hamiltonian functional (via DeepONet) and a learned convolutional kernel.
FS-HNN is not symplectic, however, and achieves energy conservation by projecting the operator onto a skew-symmetric subspace.
With ELM, we instead use learned continuous Lagrangian densities independent of the boundary conditions and discretization.
The near-symplectic construction improves energy conservation by orders of magnitude over long time horizons.

\paragraph{Classical Variational Integrators}
Variational integrators, derived directly from the discrete Euler–Lagrange equations, are symplectic by construction \citep{marsden2001discrete}.
For PDEs that arise from field theories, this construction extends to multisymplectic integrators that preserve the underlying geometric structure \citep{marsden1998multisymplectic}. 
The long-time numerical stability of these multisymplectic integrators has been proven over exponential time scales \citep{reich1999backward,hairer2006geometric} where the computed energy of the system oscillates around its true value rather than accumulating error.
The accuracy of a variational integrator depends on how well the discrete Lagrangian approximates the continuous one.
\citet{leok2012general} and \citet{hall2015spectral} demonstrate a Galerkin approach, approximating trajectories with a finite-dimensional function space, that can achieve arbitrarily high integration order.
On this foundation, our ELM method can be understood as a Galerkin-style variational integrator specifically designed for learned Lagrangian field theories.
Whereas classical variational integrators minimize a discrete action constructed from a known Lagrangian, ELM minimizes the squared Euler--Lagrange residual on space-time patches using a learned Lagrangian, evaluated at a set of quadrature points.
When this residual is zero, the integrator inherits the symplectic guarantees of classical theory.
In practice, we oversample the quadrature (using more points than required for symplecticity), trading exact symplecticity for stability of the patch solver and robustness to imperfections in the learned Lagrangian.
 

%% file: methods.tex
\section{Methodology}\label{sec:methods}

Given a configuration manifold \(Q\) and coordinate space \(X \subseteq \mathbb{R}^{n+1}\), suppose we have a learned map from the tangent bundle\footnote{More precisely, the first jet bundle \(J^1(X,Q)\) for PDEs.} \(\mathcal{L} : TQ \rightarrow \mathbb{R}\).
We want to find the path \(q : X \rightarrow Q\) such that the action
\begin{equation}
    S[q] = \int_X \mathcal{L}(q(\xi), q_{\xi_0}(\xi), \dots, q_{\xi_n}(\xi)) \;\mathrm{d}X \label{eq:action}
\end{equation}
is stationary, where \(\xi = (\xi_0, ~\dots~, \xi_{n}) \in X\) and \(q_{\xi_i} = \partial q / \partial \xi_i\).
In Lagrangian mechanics, \(\mathcal{L}\) is the Lagrangian density and \(S\) is the action.
As an example, for the 1D wave equation, \(\mathcal{L}(q, q_t, q_x) = \frac{1}{2}\mu\, q_t^2 - \frac{1}{2}c^2 q_x^2\). 
For simplicity, this methodology is formulated assuming a PDE with one spatial and one temporal dimension (\((t,x) \in X=\mathbb{R}^2\)). 
However, the formulation for ODEs and higher-dimensional PDEs proceeds in an identical manner.

Standard variational methods discretize the action \(S[q]\) to form an integrator.
This requires symbolic manipulation of \(\mathcal{L}\) and so is not possible for learned Lagrangians.
ELM is an optimization-based integrator designed for learned models, constructed at three scales (\cref{fig:architecture}).
A local, pointwise residual measuring the deviation from the learned dynamics (\cref{sec:local error}) defines the error induced by a path approximated from a finite set of nodes (\cref{sec:patch loss}).
Starting from an initial guess, a damped Newton solve swept via Jacobi iteration across multiple overlapping patches advances the global state without a coupled solve (\cref{sec:global min}).
Constructed on variational principles, this integrator is symplectic when the residual vanishes and near-symplectic in practice (\cref{sec:near_symp}).

\begin{figure}[t]
    \centering
    \includegraphics[width=\linewidth]{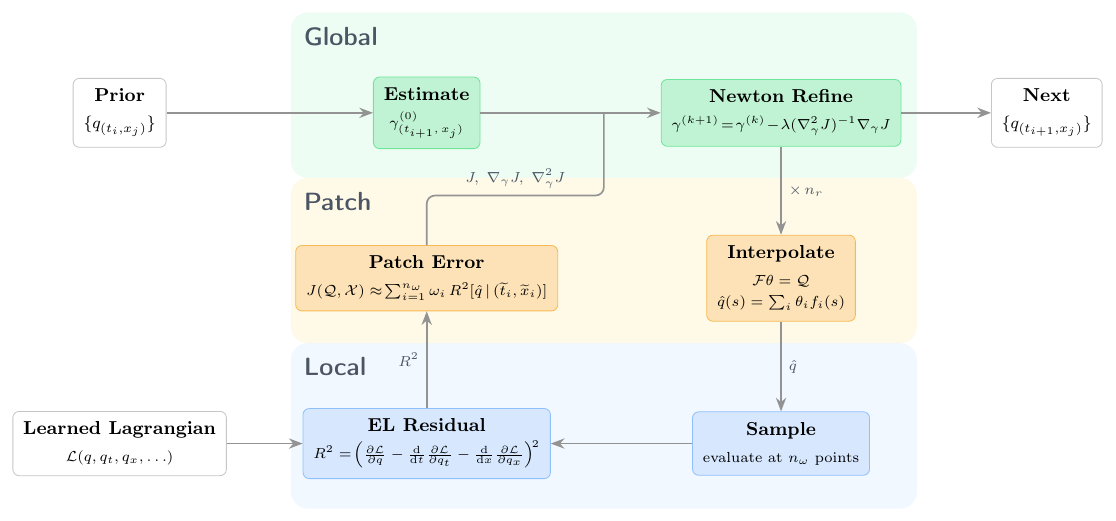}
    \caption{The construction of ELM at three scales.
    \textit{Local}: the squared Euler--Lagrange residual \(R^2\) is computed from the learned Lagrangian \(\mathcal{L}\) and evaluated at \(n_\omega\) quadrature points.
    \textit{Patch}: Hermite interpolation fits a set of basis functions to known node states to form \(\hat{q}\). The resulting residuals are summed to form the patch error \(J\).
    \textit{Global}: an initial estimate \(\gamma^{(0)}\) is refined by damped Newton iteration using \(J\); iterating \(n_r\) times around the loop advances the solution by one time step.}
    \label{fig:architecture}
\end{figure}

\subsection{Local Residual} \label{sec:local error}

Suppose we have some variation \(\delta q\) from the true path \(q\) such that \(\delta q = 0\) on the boundary of the integration domain. 
Since \(S[q]\) is a functional, a necessary condition for a minimum to exist is when its first variational derivative, \(\delta S\), equals zero:
\begin{equation*}
    \delta S = \iint \left( \frac{\partial\mathcal{L}}{\partial q} - \frac{\mathrm{d}}{\mathrm{d}t} \frac{\partial\mathcal{L}}{\partial q_t} - \frac{\mathrm{d}}{\mathrm{d}x} \frac{\partial\mathcal{L}}{\partial q_x}\right) \delta q \;\mathrm{d}t\;\mathrm{d}x = 0
\end{equation*}
Since \(\delta q\) can be any function, by the Fundamental Lemma of the calculus of variations, \(\delta S = 0\) only when the quantity in the brackets is zero everywhere on \(X\).
We denote this term as \(R\):
\begin{equation}
R[q~|~(t,x)] \equiv \left( \frac{\partial\mathcal{L}}{\partial q} - \frac{\mathrm{d}}{\mathrm{d}t} \frac{\partial\mathcal{L}}{\partial q_t} - \frac{\mathrm{d}}{\mathrm{d}x} \frac{\partial\mathcal{L}}{\partial q_x}\right)~\!\Bigg|_{(t,x)}.  \label{eq:local_residual}
\end{equation}
\(R=0\) is the Euler-Lagrange equation.
Given an approximate path \(\hat{q}\), the squared Euler--Lagrange residual \(R^2[\hat{q}~|~(t,x)]\) is a measure of the deviation from the dynamics represented by \(\mathcal{L}\).

\subsection{Patch Error} \label{sec:patch loss}

Suppose we have a set of \(m\) coordinates and a set of corresponding field values and their derivatives:
\begin{gather*}
    \mathcal{X} = \begin{bmatrix} ~ (t_1,x_1) & \dots & (t_m, x_m) ~\end{bmatrix}, \\
    \mathcal{Q} = \begin{bmatrix} ~q(t_1,x_1) & q_t(t_1,x_1) & q_x(t_1,x_1) & q_{tx}(t_1, x_1) & q(t_2,x_2) & q_t(t_2,x_2) &\dots~ \end{bmatrix}.
\end{gather*}

For a set of \(4m\) linearly independent functions \(\mathcal{F} = \{f_1, \dots, f_{4m}\}\), we construct a Hermite interpolation \(\hat{q}\) by fitting \(\mathcal{F}\) to the data.
The interpolation coefficients \(\theta \in \mathbb{R}^{4m}\) are found by solving
\begin{equation}
    \begin{bmatrix} ~f_1(\mathcal{X}) & \frac{\partial f_1(\mathcal{X})}{\partial t} & \frac{\partial f_1(\mathcal{X})}{\partial x} & \frac{\partial^2 f_1(\mathcal{X})}{\partial t \partial x} & f_2(\mathcal{X}) & \frac{\partial f_2(\mathcal{X})}{\partial t} & \dots ~ \end{bmatrix}^T ~\theta = \mathcal{Q}. \label{eq:basis_fit}
\end{equation}
The resulting best fit function is denoted as \(\hat{q} = \sum_{i =1}^{4m} \theta_i f_i\).

We define the residual accumulated by this interpolation as the patch error:
\begin{equation}
    J(\mathcal{Q}, \mathcal{X}) \equiv \iint_{\mathrm{Conv}(\mathcal{X})} R^2[\hat{q}~|~(t,x)] \;\mathrm{d}t~\mathrm{d}x \approx \sum_{i = 1}^{n_\omega} \omega_iR^2\left[\hat{q}~\middle|~\left(\widetilde{t}_i,\widetilde{x}_i\right)\right], \label{eq:lagrangian loss} 
\end{equation}
where \(\mathrm{Conv}(\mathcal{X})\) is the convex hull of \(\mathcal{X}\), from which we select \(n_\omega\) points \((\widetilde{t},\widetilde{x})\) and associated weights \(\omega\).
Throughout this work, we assume a monomial basis (e.g. \(1,x,x^2,...\)) for simplicity.

\subsection{Global Integrator} \label{sec:global min}

To advance the solution by one time step, we seek the node \(\gamma \in TQ\) at a new coordinate \(\eta \in X\) that minimizes the patch error given the known states. This defines the integrator \(G\):
\begin{equation}
    G(\mathcal{Q}, \mathcal{X}, \eta) \equiv \arg\min_{\gamma}\; J\!\left([\mathcal{Q}; \gamma],\; [\mathcal{X}; \eta]\right).\label{eq:integrator}
\end{equation}

While this minimization can be implemented in a variety of ways, we use damped Newton iteration for simplicity.
Given an initial guess \(\gamma^{(0)}\) extrapolated from previous time steps, the update is
\begin{equation}
    \gamma^{(k+1)} = \gamma^{(k)} - \lambda \left(\nabla^2_\gamma \left(J(\gamma^{(k)})\right)\right)^{-1} \nabla_\gamma \left(J(\gamma^{(k)})\right), \label{eq:newton}
\end{equation}
where \(\lambda \in (0, 1]\) is a damping parameter.
The choice of damping parameter is highly dependent on the patch overlap and the dynamical coupling of the system; for ODEs, \(\lambda=1\) is typically fine, whereas for 2D PDEs, \(\lambda=0.5\) is often needed.
The number of required iterations, by contrast, is mostly dependent on the behavior of the system near the minimum and the selection of \(\lambda\).

\subsection{Near-Symplecticity}\label{sec:near_symp}
Given a system whose dynamics are governed by an ODE and whose Lagrangian is given by \(\mathcal{L}(q,q_t)\), we employ the Gauss--Legendre (GL) quadrature on \(n_\omega\) points, which exactly integrates polynomials up to degree \(2n_\omega - 1\). We define the discrete action on the patch \([t_0, t_1]\) as
\begin{equation*}
    S_d(q(t_0), q_t(t_0), q(t_1), q_t(t_1)) \equiv \sum_{i=1}^{n_\omega} \omega_i\, \mathcal{L}(\hat{q}(\widetilde{t}_i), \hat{q}_t(\widetilde{t}_i)). \label{eq:discrete_action}
\end{equation*}
Unlike classical discrete methods that advance \(q \in Q\), ELM advances a state \((q, q_t)\) on the tangent bundle \(TQ\).
From this state, the momentum is given by \(p=\frac{\partial \mathcal{L}}{\partial q_t}\) and the differential of the discrete action can be expressed as
\begin{equation}
  \mathrm{d}S_d = p(t_1)\,\mathrm{d}q(t_1) - p(t_0)\,\mathrm{d}q(t_0) + \sum_{i=1}^{n_\omega} \omega_i\, R[\hat{q}|\widetilde{t}_i]\,\mathrm{d}\hat{q}(\widetilde{t}_i) + \mathcal{O}(\Delta t^{2n_\omega + 1}). \label{eq:gen_func}
\end{equation}
We refer the interested reader to \cref{app:gen_func} for the full derivation.

Assume that the Newton iteration in \cref{eq:newton} converges such that \(J=0\).
Since \(J\) is the sum of the squared residuals and \(R^2 \ge 0\), this requires that \(R=0\) at each GL quadrature point and thus 
\cref{eq:gen_func} reduces to \(\mathrm{d}S_d =p(t_1)\,\mathrm{d}q(t_1) - p(t_0)\,\mathrm{d}q(t_0)\).
Under these conditions, the corresponding update can be represented by a type-1 generating function \citep{marsden2001discrete} to within GL quadrature error and thus ELM is symplectic.
In \cref{sec:exp_ode}, we verify that, with the appropriate choice of quadrature, \(J\) can be driven to within machine precision of 0. 
Specifically, \(n_\omega=2\) is chosen for ODEs to produce a symplectic integrator of order \(2n_\omega = 4\).

In principle, the argument on symplecticity above could be extended to PDEs.
In practice, a larger \(n_\omega\) is needed to ensure that the Jacobi iteration is stable.
This oversampling prevents \(J\) from reaching \(0\) except in edge cases where \(q \in \mathrm{span}\,\mathcal{F}\).
Since ELM is an optimization-based integrator with a symplectic construction, we refer to ELM as being \textit{near-symplectic}.
We leave the bounding of the convergence of the Newton iteration and the backward error analysis for future work.

%% file: experiments.tex
\section{Experiments}\label{sec:results}

\subsection{Double Pendulum}\label{sec:exp_ode}
We first verify that ELM's symplectic structure for ODEs holds in practice.
Consider the chaotic double pendulum, governed by a nonlinear ODE.
Its coupled dynamics and complicated Lagrangian make it a standard benchmark for long-horizon rollouts.
\Cref{fig:double_pendulum} shows the relative energy error over the full trajectory on a log-log scale.
We compare ELM against four standard integrators using an LNN-style estimate for \(q_{tt}\) with the same underlying learned model of the system.
The non-symplectic methods (the classical Runge--Kutta (RK4) and Dormand--Prince) result in continual and steady energy accumulation.
In contrast, symplectic methods conserve a discrete Hamiltonian, so while they do oscillate, they remain bounded over long time-scales.
We see in the figure that ELM matches the fourth-order Gauss--Legendre Runge--Kutta method (GLRK, \citealp{hairer2006geometric}) since ELM is also a fourth-order symplectic method (when optimized over two GL points).

\begin{figure}[t]
    \centering
    \includegraphics[width=0.9\linewidth]{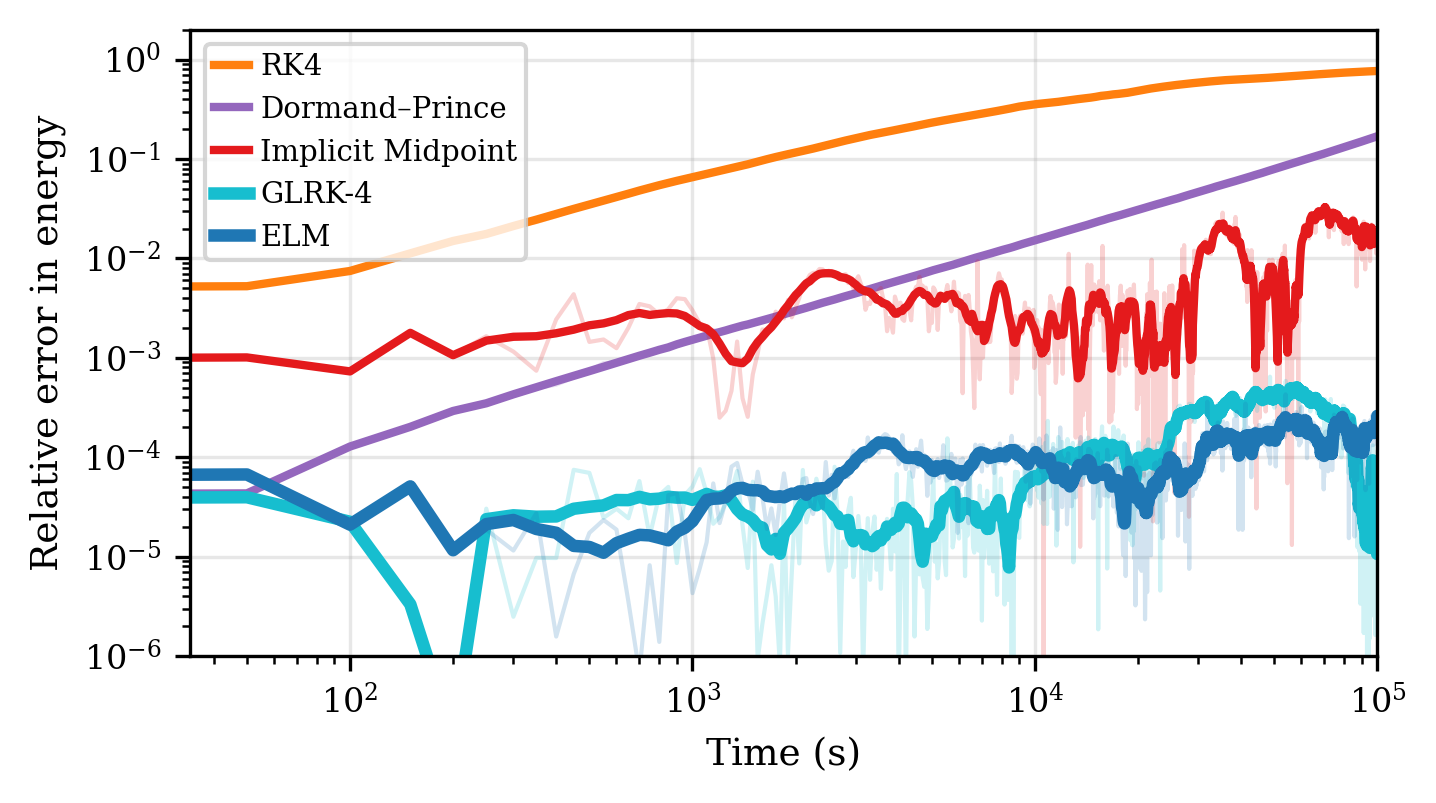}
    \caption{Relative error in energy \(|E_t-E_0|/E_0\) over \(100{,}000\,\)s for a double pendulum with initial angles \(q_1=\frac{3\pi}{7}\) and \(q_2=\frac{3\pi}{4}\).
    The double pendulum has an analytical Lagrangian \(\mathcal{L} = \dot{q}_1^2 + \tfrac{1}{2}\dot{q}_2^2 + \dot{q}_1\dot{q}_2\cos({q}_1 - {q}_2) + 2\cos{q}_1 + \cos{q}_2\).
    Results are shown for ELM and for four classical integrators using an LNN-style estimate for the acceleration.
    All methods share the same learned Lagrangian.
    Faint lines are the raw values while bold lines show a moving average to remove the oscillations characteristic of symplectic integrators.}
    \label{fig:double_pendulum}
\end{figure}

\subsection{1D Wave Equation}\label{sec:exp_wave}

Having shown the symplectic underpinnings of ELM for ODEs, we now demonstrate the strength of ELM as a mesh-free, near-symplectic integrator that allows learned Lagrangian density models to be applied to PDEs for the first time.
We compare the LNN wave model \citep{cranmer2020lagrangian} against ELM on a 1D wave equation, both using learned Lagrangians of equivalent Multilayer Perceptron (MLP) architectures.
LNN learns a discrete Lagrangian on a fixed discretization and integrates all nodes as a single ODE.  
Different from LNN, ELM learns the continuous Lagrangian density and integrates the learned Lagrangian using the methodology described in \cref{sec:methods}.

Over \(10{,}000\)\,s, the behavior of the two methods sharply diverges (\cref{fig:wave_combined}).
Even at the coarsest settings, using only 21 spatial nodes, \cref{fig:wave_heatmaps} shows how ELM maintains discernible and symmetric wavefronts over longer periods while LNN devolves sooner.
\Cref{fig:wave_drift} shows several orders of magnitude improvement in energy conservation, and the snapshots in \cref{fig:wave_snapshot} demonstrate the increased spatial fidelity of ELM.
Both improve as the number of nodes increases and time step size decreases.
In contrast, the fixed error induced by the learned discretization in LNN leads to diminishing returns when using smaller step sizes.
This highlights a major advantage of learning a continuous Lagrangian: we separate model errors from integration errors.
Since \emph{every} well-conditioned learned Lagrangian is a valid Lagrangian and represents some dynamics, the error in the learned model can only contribute to phase error while all energy drift is attributable to the integrator.
Some additional results and discussion are provided in \cref{app:heatmaps}, including heatmaps for the exact solution and additional integrators, plus the relative \(L^2\) error over the full \(10{,}000\)\,s.

\begin{figure}[t]
    \centering
    \begin{subfigure}{\linewidth}
        \centering
        \includegraphics[width=0.9\linewidth]{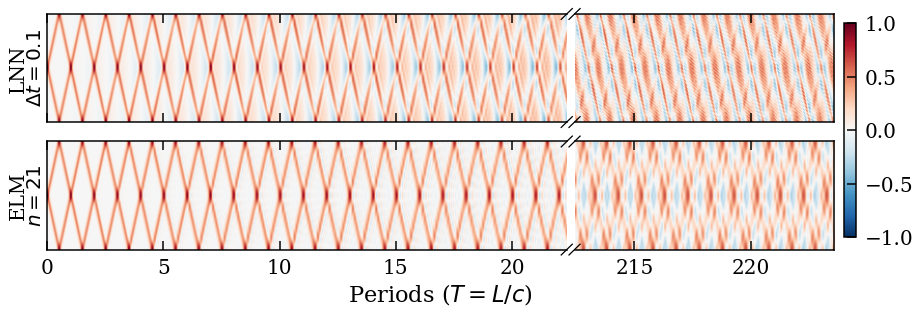}
        \caption{Spatio-temporal heatmaps of wave amplitude, showing the first \({\sim}20\) and the last \({\sim}10\) periods.}
        \label{fig:wave_heatmaps}
    \end{subfigure}

    \vspace{0.5em}
    \includegraphics[width=\linewidth]{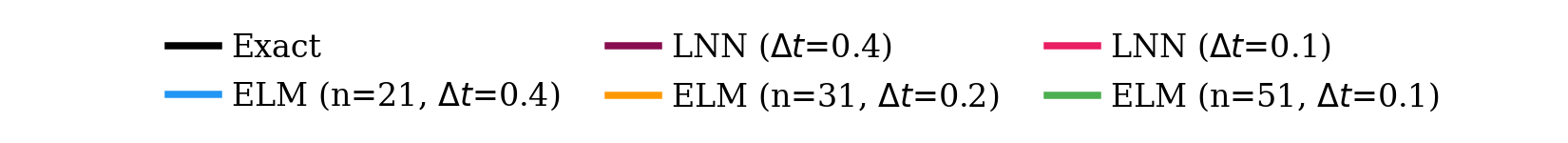}\\[-2pt]
    \begin{subfigure}[t]{0.48\linewidth}
        \includegraphics[width=\linewidth]{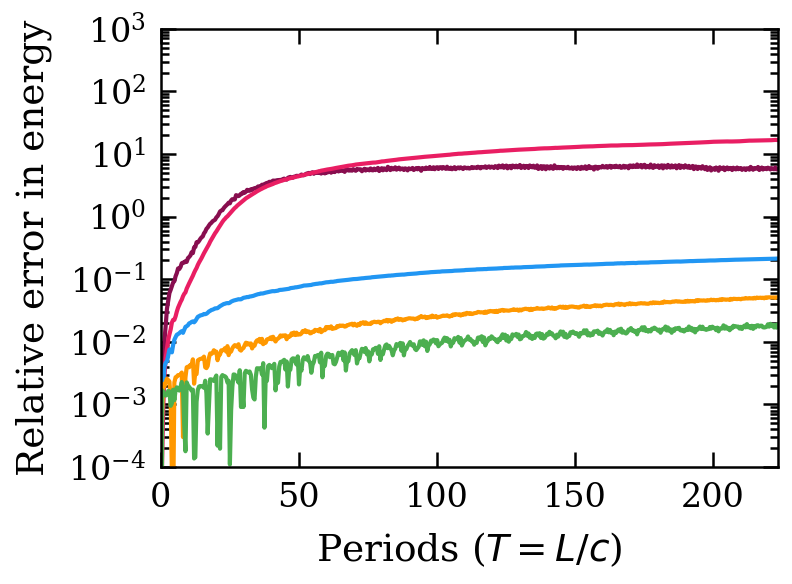}
        \caption{Relative error in energy \(|E_t - E_0|/E_0\).}
        \label{fig:wave_drift}
    \end{subfigure}
    \hfill
    \begin{subfigure}[t]{0.48\linewidth}
        \includegraphics[width=\linewidth]{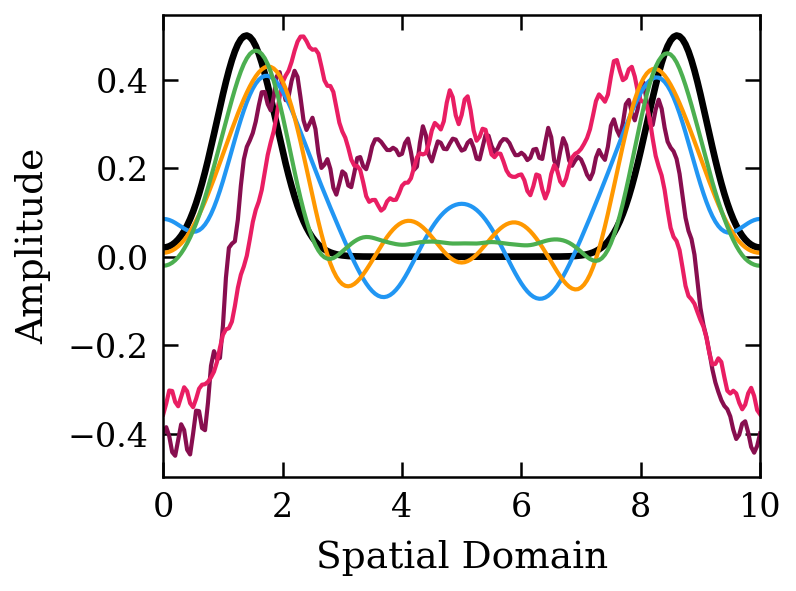}
        \caption{Spatial profile \(q(x)\) at \(t = 1000\)\,s.}
        \label{fig:wave_snapshot}
    \end{subfigure}
    \caption{(a) Wave amplitude heatmaps, (b) relative error in energy, and (c) spatial profile for the 1D wave equation \(q_{tt} = 0.05\,q_{xx}\), with periodic boundary conditions, and a Gaussian initial condition. 
    LNN uses a velocity-Verlet integrator with two time-step sizes. 
    ELM is shown at three spatio-temporal resolutions. Parameters for each method and training procedure are listed in \cref{app:cost,app:training}.
    }
    \label{fig:wave_combined}
\end{figure}

\subsection{2D Wave Equation}\label{sec:exp_2d}
To enable ELM to integrate systems of higher dimensions, only mechanical changes are needed: the basis functions must incorporate the new coordinate and one must integrate over it.
Otherwise, everything proceeds identically.
Unlike the periodic boundaries used for the 1D wave equation, the 2D wave equation is simulated in a box with Dirichlet boundary conditions, \(q=0\).
On boundary nodes, \(q\) is fixed and only the spatial derivatives are optimized, using the nearest complete patch; this leaves \(q=0\) as a hard constraint.

\Cref{fig:wave_2d_300s} compares ELM against a Fourier Neural Operator (FNO, \citealp{li2021fourier}) and PDE-Refiner \citep{lippe2023pderefiner} over \(1{,}000\)\,s.
The center-node trace (\cref{fig:wave_2d_center}) shows that ELM tracks the analytical ground truth with only minor phase drift and energy loss, while both FNO and PDE-Refiner have identifiable drift within the first \(20\)\,s.
The energy drift shown in \cref{fig:wave_2d_energy} tells a similar story.
While PDE-Refiner fails more gracefully than FNO, it still devolves into a non-physical global oscillation seen in \cref{fig:wave_2d_snapshots_body}.
ELM achieves better overall accuracy and energy conservation with \(40{,}000{\times}\) fewer model parameters than PDE-Refiner.
The L\(^2\) error is provided in \cref{app:2d_wave}.  

To the best of our knowledge, this is the first work to simulate a PDE with two spatial dimensions using a learned Lagrangian density over long time-horizons.

\begin{figure}[t]
    \centering
    \includegraphics[width=\linewidth]{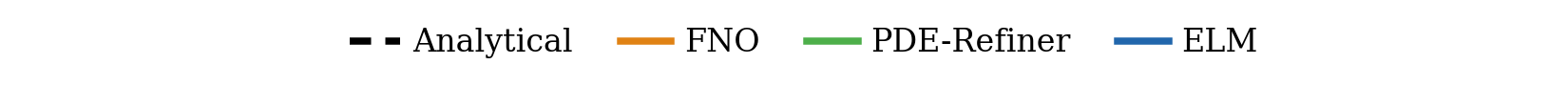}\\[-2pt]
    \begin{subfigure}[t]{0.48\linewidth}
        \includegraphics[width=\linewidth]{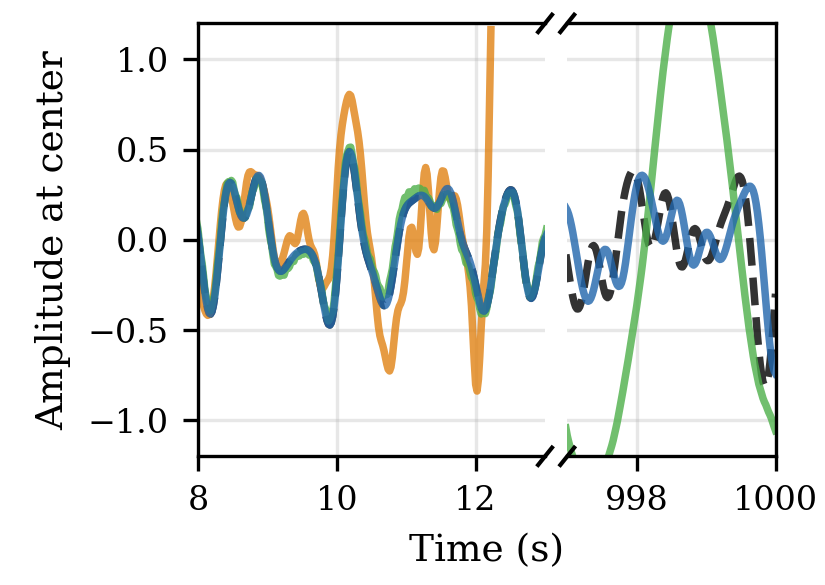}
        \caption{Center-node displacement \(q(0,0,t)\)  showing \({\sim}5\)\,s before FNO diverges and the last \({\sim}3\)\,s simulated.}
        \label{fig:wave_2d_center}
    \end{subfigure}
    \hfill
    \begin{subfigure}[t]{0.48\linewidth}
        \includegraphics[width=\linewidth]{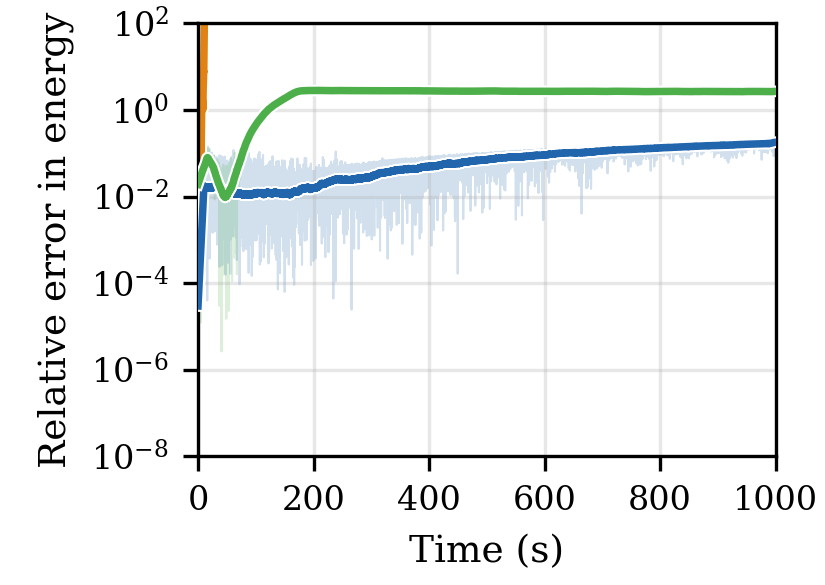}
        \caption{Relative energy error \(|E_t - E_0|/E_0\). Faded lines are raw values; bold lines show a moving-average.}
        \label{fig:wave_2d_energy}
    \end{subfigure}

    \vspace{0.5em}
    \begin{subfigure}{\linewidth}
        \centering
        \includegraphics[width=\linewidth]{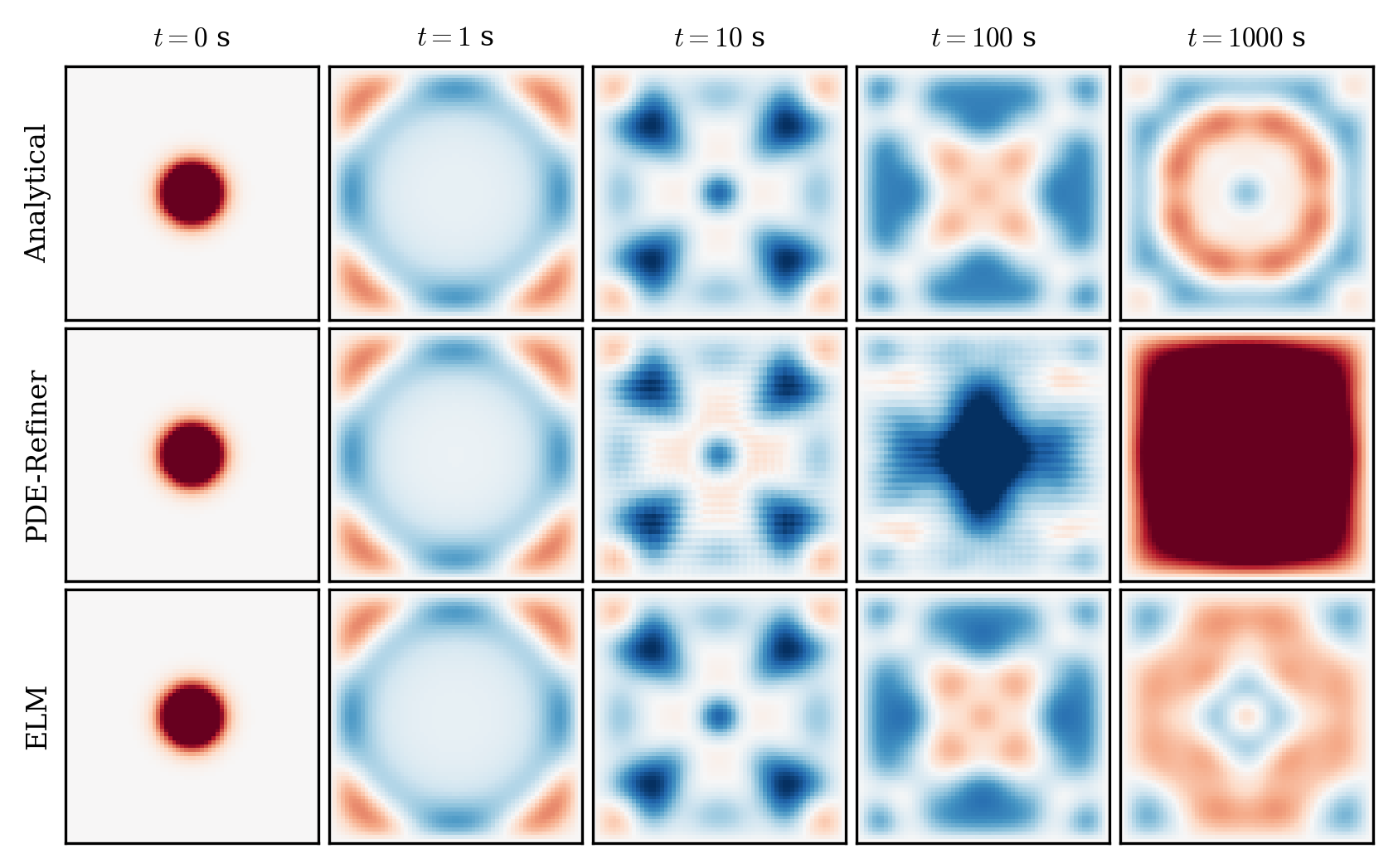}
        \caption{Spatial snapshots \(q(x,y)\) at \(t = 0, 1, 10, 100, 1000\)\,s. Top: eigenmode ground truth. Middle: PDE-Refiner. Bottom: ELM. FNO is omitted as it diverges by \(t \sim 22\)\,s.}
        \label{fig:wave_2d_snapshots_body}
    \end{subfigure}
    \caption{(a) Center amplitude, (b) relative energy error, and (c) spatial snapshots in time for the 2D wave equation \(q_{tt} = q_{xx} + q_{yy}\), using a 31 \(\times\) 31 grid with Gaussian initial condition, and Dirichlet boundary conditions over \(1{,}000\)\,s (\({\sim}350\) periods of the fundamental mode). 
    ELM uses a learned Lagrangian density; FNO and PDE-Refiner are autoregressive baselines on a 64 \(\times\) 64 grid. The eigenmode ground truth is found using a 50 \(\times\) 50 mode decomposition. Parameters for each method and training procedure are listed in \cref{app:cost,app:training}.}
    \label{fig:wave_2d_300s}
\end{figure}

\begin{figure}[t]
    \centering
    \begin{subfigure}[t]{0.48\linewidth}
        \includegraphics[width=\linewidth]{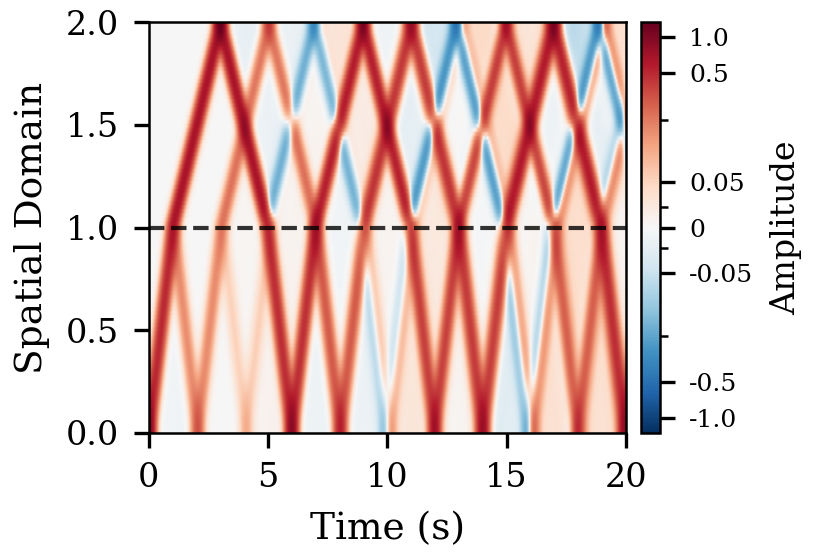}
        \caption{Spatio-temporal heatmap of a wave traversing a sigmoid impedance interface at \(x = 1\) (dashed line).}
        \label{fig:wave_x_heatmap}
    \end{subfigure}
    \hfill
    \begin{subfigure}[t]{0.48\linewidth}
        \includegraphics[width=\linewidth]{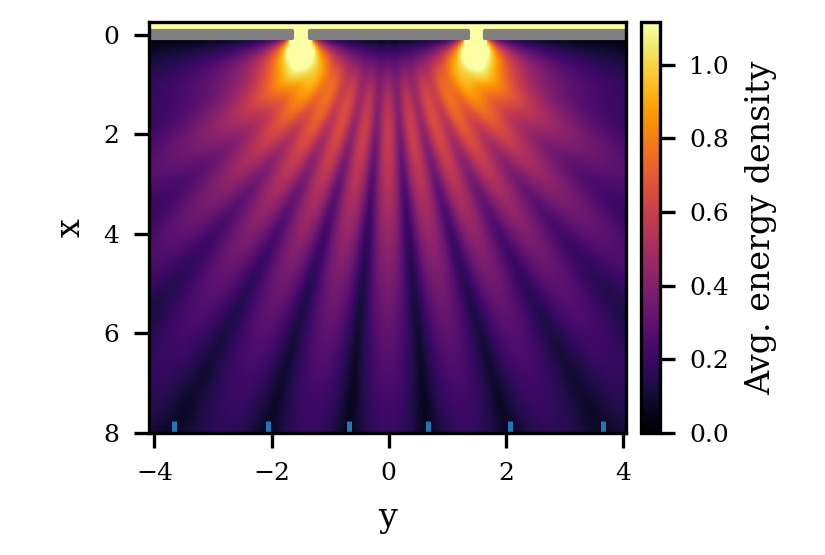}
        \caption{Time-averaged energy density for a driven wave hitting two slits, demonstrating interference.}
        \label{fig:double_slit}
    \end{subfigure}
    \caption{Demonstrations of (a) partial reflection and (b) diffraction and interference. 
    Both panels use models trained only on homogeneous data with simple boundary conditions. 
    Boundary effects are realized purely through the spatial variation of the Lagrangian or constraints on degrees of freedom.
    In Figure (b), the theoretical minima for the fringes in this setup are shown as blue tick marks.}
    \label{fig:boundary_effects}
\end{figure}

\subsection{Emergent Boundary Effects}\label{sec:exp_boundary}

Both PDE results used identical boundary conditions in training as in inference.
Since learned continuous Lagrangians are pointwise, models compose and transfer to new domains without retraining.

We first demonstrate this in 1D.
Suppose we have two learned Lagrangians independently trained on separate homogeneous media with different wave speeds and we want to explore what happens when they are abutted.
Here, we represent the transition as a sharp sigmoid blend between the two Lagrangians at \(x=1\).
\Cref{fig:wave_x_heatmap} shows the result: the wave partially reflects and partially transmits at the interface, exactly as waves behave when they hit an impedance barrier.
Compared to the analytical solution with a step boundary, the learned model accumulated only \({\sim}1\%\) energy drift over \(20\)\,s, with stable behavior at the interface.
The domain edges use homogeneous Neumann boundary conditions.

The same flexibility extends to 2D.
\Cref{fig:double_slit} shows the mean energy over time when a driven wave hits a barrier with two slits, producing interference fringes whose \(y\)-positions are within \(0.083\) of those predicted by theory.
This example has three types of boundary conditions: a sinusoidally driven input on one edge, an internal wall with Dirichlet boundary conditions, and absorbing boundaries on all others.
Unlike the prior results, this configuration has driven and absorbing edges, so results demonstrate steady-state behavior.
This uses the identical model from \cref{sec:exp_2d}; barriers, slits, and driven boundaries are enforced purely through constraints on the degrees of freedom (see \cref{app:bcs}).
The operator baselines of \cref{sec:exp_2d} have no comparable method for transferring to this domain.

%% file: conclusion.tex
\section{Discussion and Conclusion}\label{sec:conclusion}

We have presented Euler-Lagrange Minimization (ELM), a novel, near-symplectic integrator that uses learned Lagrangians for forward simulation of ODE and PDE systems.  
Conceptually, ELM separates \textit{what} a system's physics are (the Lagrangian density) from \textit{how} to simulate them (the variational integrator).  
ELM achieves this through a construction at three scales: the local squared Euler-Lagrange residual which measures deviation from the learned dynamics, a patch error which describes the error induced by a finite basis over known states, and a minimization of this error via damped Newton iteration.
When the residual reaches zero, the integrator is symplectic.
When it does not, the integrator remains near-symplectic, maintaining orders of magnitude less energy drift over long rollouts compared to existing methods.
For PDE systems, this is by design.
We deliberately oversample the quadrature to trade exact symplecticity for the stability of the patch solver.

Our results show that ELM achieves energy conservation comparable to the symplectic GLRK integrator for the chaotic double pendulum.  
On 1D waves, ELM outperforms LNN at similar computational cost and, unlike LNN, scales accuracy with resolution.
ELM also constitutes the first simulation of a 2D PDE using a learned Lagrangian density, outperforming operator baselines with orders of magnitude fewer parameters.
More broadly, ELM avoids baking discretization into the learned model, so accuracy and compute can be traded at inference time. 

A further consequence of pairing a learned continuous Lagrangian with local optimization is that models compose and transfer across domains without retraining.
Spatial variation in the Lagrangian produces physically correct reflection and transmission at impedance interfaces; constraints on degrees of freedom realize Dirichlet, Neumann, driven, and absorbing boundaries through the same integrator.
The identical model used for the homogeneous 2D wave produces double-slit interference when given a barrier and a driven edge.
Operator-based methods, which fuse dynamics with discretization, cannot transfer this way.
By not leaving the integrator as an implementation detail, a single trained model behaves as a physics primitive.

Several limitations remain.
ELM is currently more expensive per time step than standard integrators, particularly for the ODE case where symplectic methods are already available.
The Jacobi sweep converges slowly for strongly-coupled systems, particularly in higher-dimensional PDEs where patches naturally overlap more.
Finally, ELM currently accepts only Lagrangian densities with up to first derivatives, excluding systems such as beam mechanics.
None of these limitations require rethinking the local optimization framework to address.
Extending ELM to dissipative systems, higher-order field theories, and adaptive mesh refinement are natural directions for future work.

Taken together, the results of this paper argue that the integrator deserves the same care as the model to achieve flexible physics-learning systems.
ELM offers a starting point for learned models that can be deployed, constrained, and combined at inference time rather than retrained per scenario.

%% file: appendix.tex
\appendix
\crefalias{section}{appendix}

\section{Deriving the Generating Function}\label{app:gen_func}
Our goal is to relate the differential of the discrete action \(\mathrm{d}S_d\) to the Euler-Lagrange residual \(R\).

\begin{lemma} \label{lem:dif_L}
    The differential of the Lagrangian \(\mathrm{d}\mathcal{L}\) can be expressed in terms of the residual \(R\) as
    \begin{equation*}
        \mathrm{d}\mathcal{L} = R \;\mathrm{d}q + \frac{\mathrm{d}}{\mathrm{d}t}\left( \frac{\partial \mathcal{L}}{\partial q_t} \;\mathrm{d}q \right).
    \end{equation*}
\end{lemma}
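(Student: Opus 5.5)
This lemma lives in the ODE setting of \cref{sec:near_symp}: $\mathcal{L}=\mathcal{L}(q,q_t)$ is evaluated along a path $q(t)$, and $\mathrm{d}$ is the differential with respect to a variation of that path. Concretely, $\mathrm{d}q$ means $\delta q(t)$, and $\mathrm{d}q_t$ is the induced variation of the velocity. The residual here is \cref{eq:local_residual} with the spatial term removed,
\[
R = \frac{\partial\mathcal{L}}{\partial q} - \frac{\mathrm{d}}{\mathrm{d}t}\frac{\partial\mathcal{L}}{\partial q_t}.
\]
The plan is the familiar integration-by-parts identity behind the Euler--Lagrange equations, carried out pointwise rather than under an integral.

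First, I expand the differential by the chain rule:
\[
\mathrm{d}\mathcal{L} = \frac{\partial\mathcal{L}}{\partial q}\,\mathrm{d}q + \frac{\partial\mathcal{L}}{\partial q_t}\,\mathrm{d}q_t .
\]

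Second, I need the key fact that the variation commutes with time differentiation: $\mathrm{d}q_t = \frac{\mathrm{d}}{\mathrm{d}t}(\mathrm{d}q)$. This holds because the variation is taken over a family of paths $q(t;\epsilon)$ that is smooth in $(t,\epsilon)$. In our application, $\hat q$ is the Hermite interpolant, linear in the node data $\mathcal{Q}$ through \cref{eq:basis_fit}, so it is smooth and mixed partials commute. This is the one step that needs justification rather than computation. I expect it to be the main (mild) obstacle, because $\mathrm{d}q_t$ is an independent coordinate on $TQ$ and only becomes a time derivative once we restrict to variations of actual curves.

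Third, I apply the product rule:
\[
\frac{\mathrm{d}}{\mathrm{d}t}\left(\frac{\partial\mathcal{L}}{\partial q_t}\,\mathrm{d}q\right) = \left(\frac{\mathrm{d}}{\mathrm{d}t}\frac{\partial\mathcal{L}}{\partial q_t}\right)\mathrm{d}q + \frac{\partial\mathcal{L}}{\partial q_t}\,\mathrm{d}q_t .
\]
Solving this for $\frac{\partial\mathcal{L}}{\partial q_t}\,\mathrm{d}q_t$ and substituting into the chain-rule expansion gives
\[
\mathrm{d}\mathcal{L} = \left(\frac{\partial\mathcal{L}}{\partial q} - \frac{\mathrm{d}}{\mathrm{d}t}\frac{\partial\mathcal{L}}{\partial q_t}\right)\mathrm{d}q + \frac{\mathrm{d}}{\mathrm{d}t}\left(\frac{\partial\mathcal{L}}{\partial q_t}\,\mathrm{d}q\right),
\]
and the bracket is exactly $R$.

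Finally, the only regularity used is that $\mathcal{L}$ is twice differentiable, so that $\frac{\mathrm{d}}{\mathrm{d}t}\frac{\partial\mathcal{L}}{\partial q_t}$ exists along the path; this is the standing assumption on the learned model. For PDEs the same computation would produce an additional divergence term $\frac{\mathrm{d}}{\mathrm{d}x}\bigl(\frac{\partial\mathcal{L}}{\partial q_x}\,\mathrm{d}q\bigr)$, which I would note as a remark.
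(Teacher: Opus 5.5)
Your proof is correct and takes essentially the same route as the paper: a chain-rule expansion of $\mathrm{d}\mathcal{L}$, the product rule applied to $\frac{\mathrm{d}}{\mathrm{d}t}\bigl(\frac{\partial\mathcal{L}}{\partial q_t}\,\mathrm{d}q\bigr)$, and the identity $\frac{\mathrm{d}}{\mathrm{d}t}\,\mathrm{d}q = \mathrm{d}q_t$. The paper runs the computation in the opposite direction, expanding the right-hand side until it collapses to the chain rule, and it states the commutation identity without the justification you supply.
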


\begin{proof}
    We proceed via product rule, noting that \(\frac{\mathrm{d}}{\mathrm{d}t} \;\mathrm{d}q= \mathrm{d}q_t\):
    \begin{align*}
        R \;\mathrm{d}q + \frac{\mathrm{d}}{\mathrm{d}t}\left( \frac{\partial \mathcal{L}}{\partial q_t} \;\mathrm{d}q \right) &= \left(\frac{\partial \mathcal{L}}{\partial q} \;\mathrm{d}q - \frac{\mathrm{d}}{\mathrm{d}t}\frac{\partial \mathcal{L}}{\partial q_t} \;\mathrm{d}q\right) + \left( 
        \frac{\mathrm{d}}{\mathrm{d}t}\frac{\partial \mathcal{L}}{\partial q_t} \;\mathrm{d}q + \frac{\partial \mathcal{L}}{\partial q_t} \frac{\mathrm{d}}{\mathrm{d}t}\mathrm{d}q\right) \\
        &= \frac{\partial \mathcal{L}}{\partial q} \;\mathrm{d}q + \frac{\partial \mathcal{L}}{\partial q_t} \mathrm{d}q_t \\
        &= \mathrm{d}\mathcal{L} \hfill\qedhere 
    \end{align*}
\end{proof}

We now proceed in deriving the generating function form shown in \cref{eq:gen_func}:
\begin{align*}
    \mathrm{d}S_d &= \sum_{i=1}^{n_\omega} \omega_i \;\mathrm{d}\mathcal{L} \\
    &= \sum \omega_i R \;\mathrm{d}q + \sum \omega_i \frac{\mathrm{d}}{\mathrm{d}t}\left( \frac{\partial \mathcal{L}}{\partial q_t} \;\mathrm{d}q \right) \\
    &= \sum \omega_i R \;\mathrm{d}q + \int_{t_0}^{t_1} \frac{\mathrm{d}}{\mathrm{d}t}\left( \frac{\partial \mathcal{L}}{\partial q_t} \;\mathrm{d}q \right) \;\mathrm{d} t + \mathcal{O}\left(\Delta t^{2n_\omega+1}\right) \\
    &= \sum \omega_i R \;\mathrm{d}q  + \left.\frac{\partial \mathcal{L}}{\partial q_t} \;\mathrm{d}q\;\right|_{t_0}^{t_1} + \mathcal{O}\left(\Delta t^{2n_\omega+1}\right) \\
    &= \sum \omega_i R \;\mathrm{d}q + p(t_1)\,\mathrm{d}q(t_1) - p(t_0)\,\mathrm{d}q(t_0) + \mathcal{O}\left(\Delta t^{2n_\omega+1}\right).
\end{align*}
Transitioning between each line, this is the result of the following steps:
\begin{enumerate}
    \item Applying \cref{lem:dif_L}.
    \item Using the standard approximation error of GL quadrature.
    \item Applying the fundamental theorem of calculus.
    \item Using the Legendre transformation to obtain the momenta \(p_i\).
\end{enumerate}

As discussed in \cref{sec:near_symp}, when \(R \rightarrow 0\) and the step size \(\Delta t\) is sufficiently small, this means that the update induced by ELM can be represented by a type-1 generating function.
Thus, it is symplectic.

\newpage
\section{Computational Cost Summary}\label{app:cost}

\input{computational_cost_table}

\FloatBarrier
\section{Training Procedures}\label{app:training}

The model and all surrounding code were implemented in JAX \citep{jax2018github} and with the Equinox library \citep{kidger2021equinox}.
Training and inference was performed on a single machine with an NVIDIA\textsuperscript{\textregistered}  5070 Ti.

The remainder of this appendix begins by describing the training and execution for ELM across all systems.
Descriptions of architecture and training data follow for each system examined in \cref{sec:results}.
These subsections also describe how comparison methods were trained, where appropriate.
Networks are named in line with the \textbf{Net} column of \cref{tab:cost}.
The remaining subsections provide some further discussion on the data requirements for training ELM.

\subsection{Training ELM}

The three ELM networks (\textbf{A}, \textbf{C}, \textbf{D}) share a single training procedure.
Each network minimizes the mean squared Euler--Lagrange residual (\(R^2\)) on a batch of pointwise samples, plus a small set of \emph{gauge constraints} that pin down the affine ambiguity of the Lagrangian.
For ODEs, for any function \(\mathcal{L}(q,q_t)\) which satisfies the Euler--Lagrange equations, the Lagrangian
\begin{equation*}
    \mathcal{L}^*(q,q_t)= A\mathcal{L}(q,q_t) + Bq_t +C
\end{equation*}
also satisfies the equations for any \(A,B,C\in \mathbb{R}\).
A similar statement holds for PDEs.

To counteract this ambiguity simply without affecting the solution space, a set of soft gauge constraints are added to the loss:
\begin{equation}
    L = R^2 + \left(\frac{\partial^2\mathcal{L}(0)}{\partial q_t^2} - 1\right)^2 + \left(\frac{\partial\mathcal{L}(0)}{\partial q_t}\right)^2 +\mathcal{L}(0)^2 \label{eq:loss}
\end{equation}
Combined, these gauge constraints seek to fix \((A,B,C) = (1,0,0)\).
While \(R^2\) is evaluated across the batch, these gauge constraints only fix \(\mathcal{L}\) at the origin. 
While this is sufficient for the wave equation, more complicated Lagrangians such as the double pendulum require additional assistance.

The first step of any ELM rollout requires the Hermite degrees of freedom at \(t = 0\): 
\begin{enumerate}
    \item \((q, q_t)\) for the double pendulum.
    \item \((q, q_t, q_x, q_{xt})\) for the 1D wave.
    \item \((q, q_t, q_x, q_y, q_{xt}, q_{yt}, q_{xy}, q_{xyt})\) for the 2D wave.
\end{enumerate}
These can either be computed analytically from the closed-form initial condition or estimated from data, the same way a finite-difference solver seeds its initial half-step. 
No multi-frame warm-up is required.

\subsection{Double-pendulum network (A)}\label{app:training_a}

Using the loss described by \citet{cranmer2020lagrangian}, a generic MLP can learn the double-pendulum Lagrangian directly from \((q, q_t, q_{tt})\) data. 
Let \(M(q) = \frac{\partial^2 \mathcal{L}}{ \partial q_t^2}\) be the mass matrix of the network.
Then, the LNN acceleration can be computed as
\begin{equation*}
    q_{tt} = M(q)^{-1}\left(\frac{\partial \mathcal{L}}{ \partial q} - \frac{\partial^2 \mathcal{L}}{\partial q \partial q_t}q_t\right),
\end{equation*}
whose difference against the acceleration in data is minimized during training.
By taking the inverse of the mass matrix, this loss implicitly regularizes the network away from eigenvalues close to zero.
Without this regularization, the gauge terms in \cref{eq:loss} are insufficient to prevent a collapse of the mass matrix in this system for a generic network.

While one solution to this problem might be to either use the original LNN loss or apply more recent work on stabilizing LNN training \citep{hamzaogullari2026stabilized}, to keep the training procedure simple and consistent we instead adopt a DeLaN inspired architecture for the double-pendulum \emph{only}.
The new model is defined as
\begin{equation*}
    \mathcal{L}(q,q_t) = \tfrac{1}{2} q_t^T M(q) q_t - V(q),
\end{equation*}
where \(M(q)\) is produced from a Cholesky factor and \(V(q)\) is a separate scalar head.
The network reads \((\sin q(t_0), \cos q(t_0), \sin q(t_1), \cos q(t_1), q_t(t_0), q_t(t_1))\) through three hidden layers of width 200 with Softplus activations. 
It is trained for \(200{,}000\) steps on \(20{,}000\) samples of \((q, q_t, q_{tt})\) generated from the analytical double-pendulum Lagrangian, with angles drawn uniformly from \([-\pi, \pi]\) and angular velocities up to \(q_t^{\max} = 8\).

The imposition of structure here is accepted for the following reasons:
\begin{itemize}
    \item We know that a generic MLP can learn a double pendulum and that the resulting model can be used here without modification.
    \item All comparisons in \cref{sec:exp_ode} use the same network, so ELM does not gain any advantage.
    \item As ELM represents a symplectic integrator, and since any learned Lagrangian represents some set of dynamics, the model error is separate from the integration error and does not compound.
\end{itemize}

\subsection{1D wave networks (B, C)}\label{app:training_bc}
Networks \textbf{B} and \textbf{C} share hidden-layer width and differ only in the number of inputs. 
\textbf{B} tries to learn the spatial discretization, while \textbf{C} is a strictly pointwise function.

\paragraph{LNN baseline (B).} 
The architecture from \citet{cranmer2020lagrangian} consists of an MLP with three hidden layers of width 400 and Softplus activations, taking the 6-tuple \((q(t,x_{i-1}), q(t,x_i), q(t,x_{i+1}), q_t(t,x_{i-1}), q_t(t,x_i), q_t(t,x_{i+1}))\) from a 3-node spatial stencil. 
This model was trained on 1D wave trajectories (\(q_{tt} = 0.05\,q_{xx}\) with periodic boundary conditions) using the original code, with the best model from 200 random seeds returned.
LNN training is unstable enough that only \({\sim}5\)--\(10\%\) of seeds converge to a test loss below 1.5.

\paragraph{ELM (C).} 
Training samples are drawn from random superpositions of plane waves on the relevant domain (5 modes, wavenumbers are uniform in \([-k_\text{bound}, k_\text{bound}]\), frequencies are set by the linear dispersion relation \(\omega = c|k|\)). 
All derivatives the loss touches are computed analytically from the closed-form superposition. 
Training was conducted with Adam with a cosine one-cycle schedule (learning rate \(10^{-3} \to 0.2\)).

The network is an MLP with three hidden layers of width 400 and Softplus activations, taking the \emph{pointwise} 3-tuple \((q, q_t, q_x)\) at a single space-time point. 
The model was trained for \(50{,}000\) steps on \(5{,}000\) plane-wave samples (\(k_\text{bound} = 4\), \(c^2 = 0.05\)).
Unlike the LNN model, only a single arbitrary seed was used.

\subsection{2D wave networks (D, E, F)}\label{app:training_def}

\paragraph{ELM (D).}
Training samples are drawn identically to the 1D wave case, only this time using a super position of 2D waves.
The network is an MLP with two hidden layers of width 40 and Softplus activations, taking the \emph{pointwise} 4-tuple \((q, q_t, q_x, q_y)\) at a single space-time point. 
The model was trained for \(100{,}000\) steps on \(10{,}000\) plane-wave samples (\(k_\text{bound} = 6\), \(c^2 = 1\)).

\paragraph{PDE-Refiner (E).} 
A U-Net conditioned on a diffusion-step embedding, trained as a denoiser with 8 denoising steps and a 4-frame input history. 
The training data is 400 eigenmode-decomposed trajectories of the 2D wave on a \(64 \times 64\) grid with \(\Delta t = 0.02\)\,s.
Half of this training data is a random superposition of modes, half is simulated initial Gaussian pulses.
The model was trained for 400 epochs.

\paragraph{FNO (F).} 
A standard Fourier Neural Operator, trained as a one-step predictor with a 10-frame input history. 
The training data is the same 400 eigenmode trajectories on \(64 \times 64\) at \(\Delta t = 0.02\)\,s, trained for 500 epochs.

\paragraph{Comparison of data requirements.} 
The disparity between dataset sizes is worth quantifying. 
Network \textbf{D} sees \(10^{4}\) pointwise samples, each a 4-tuple of scalars, totalling \(4 \times 10^{4}\) scalar values seen during training. 
Networks \textbf{E} and \textbf{F} each see 400 trajectories of length \(T\) on a \(64 \times 64\) grid at \(\Delta t_\text{save} = 0.02\)\,s, sliding the input/target window across each trajectory. 
With even a modest trajectory length of \(T=1\) this is on the order of \(8 \times 10^{8}\) space-time scalar values per epoch, repeated for hundreds of epochs. 
ELM is trained on roughly five orders of magnitude less raw data, on a network up to five orders of magnitude smaller, and still produces the greater long time-horizon stability seen in \cref{fig:wave_2d_300s}.

\subsection{The cost of pointwise derivatives}

ELM's training data and initial state require pointwise field derivatives, which may seem more demanding than the dense regular-grid snapshots that autoregressive baselines (\textbf{E}, \textbf{F}) consume. 
In practice, it is less so.
ELM's loss is strictly pointwise: training samples need not lie on a regular grid, share a coordinate frame, or be uniformly spaced.
In principle, four sensors placed near to each other, sampled over sufficient time, produce all the data needed to train a model for ELM.
This is the minimum needed to recover \(q_{xy.}\)
The autoregressive baselines, by contrast, are tied to a dense mesh for both training and rollout and require a multi-frame input history (4 frames for PDE-Refiner, 10 for FNO).
All derivatives needed for ELM can be recovered from this data via standard finite-differences approximation.

\section{Extended 1D Wave Comparison}\label{app:heatmaps}

\Cref{fig:wave_l2,fig:heatmaps_all} extend \cref{sec:exp_wave} with the relative \(L^2\) error and full spatio-temporal heatmaps over the same \(10{,}000\)\,s problem (\(q_{tt} = 0.05\,q_{xx}\), periodic boundary conditions, Gaussian initial condition).

In addition to the LNN and ELM-learned configurations of \cref{fig:wave_combined}, both figures include two compute-matched symplectic baselines on the same \(n{=}51\), \(\Delta t{=}0.1\) discretization: GLRK and ELM-ODE. 
Both methods share the same fourth-order finite differences to compute spatial derivatives and both still use the same underlying learned Lagrangian.
To use GLRK, a LNN-style estimate for the second time derivative is used.

ELM at all three resolutions mostly preserves wave structure for the full duration, with higher resolution producing sharper wavefronts. 
LNN at both time steps shows progressive amplitude decay and phase distortion, consistent with the energy drift reported in \cref{fig:wave_combined}. 
GLRK and ELM-ODE are virtually identical on the heatmap and \(L^2\) plot, once again validating the symplecticity of ELM on ODEs.
These methods also show that much of the distortion exhibited by ELM on the various configs is the inherent phase error in the learned Lagrangian.
On short time scales, the joint space-time optimization of ELM achieves lower \(L^2\) error than the space-discretized symplectic integrators.
This suggests that minimizing on space-time patches avoids artifacts caused by fixed discretization.

\begin{figure}[h]
    \centering
    \includegraphics[width=0.7\linewidth]{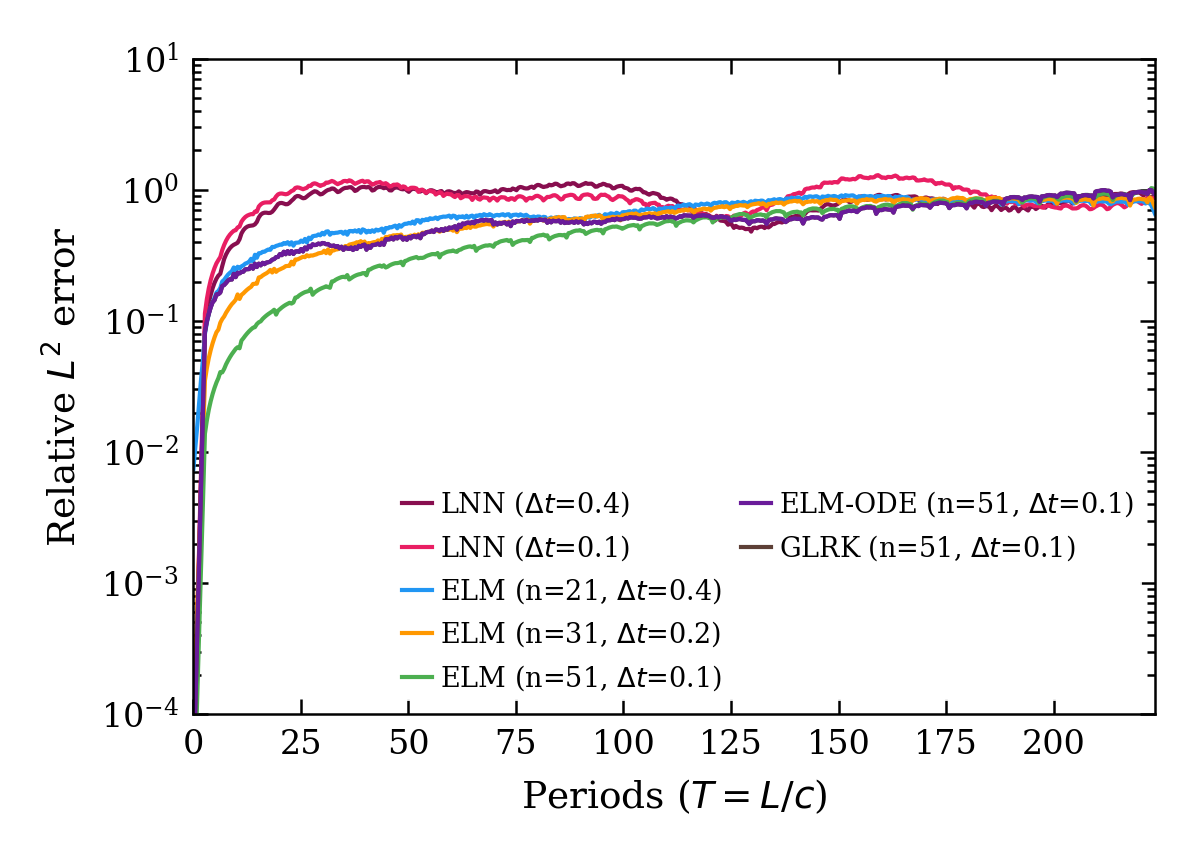}
    \caption{Relative \(L^2\) error vs.\ exact solution over the full \(10{,}000\)\,s (\(224\) periods) for the methods of \cref{fig:wave_combined}, plus the GLRK and ELM-ODE symplectic baselines.}
    \label{fig:wave_l2}
\end{figure}

\begin{figure}[h]
    \centering
    \includegraphics[width=0.8\textwidth]{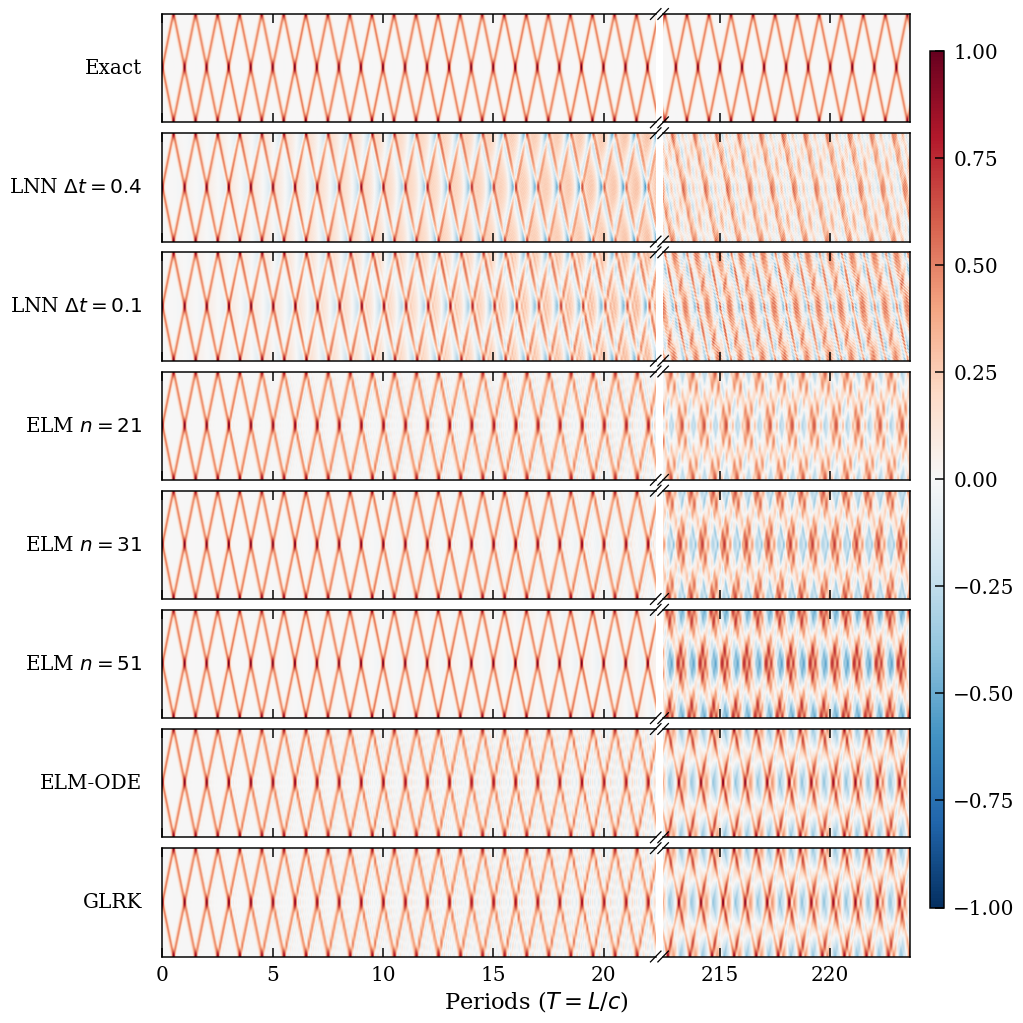}
    \caption{Spatio-temporal heatmaps \(q(x,t)\) over \(10{,}000\)\,s (\(224\) periods) extending \cref{fig:wave_heatmaps}. 
    The figure shows the first \({\sim}20\) periods and the last \({\sim}10\). Parameters for each method are listed in \cref{app:cost}.}
    \label{fig:heatmaps_all}
\end{figure}

\section{Extended 2D Wave Comparison}\label{app:2d_wave}

\Cref{fig:wave_2d_l2} extends \cref{sec:exp_2d} with the relative \(L^2\) error against the eigenmode ground truth over the full \(1{,}000\)\,s rollout.
The qualitative result matches \cref{fig:wave_2d_energy}.
ELM is the only method that stays below \(10^{0}\); PDE-Refiner saturates near \(10^{0}\), and FNO diverges before \(30\)\,s.

\begin{figure}[h]
    \centering
    \includegraphics[width=0.7\linewidth]{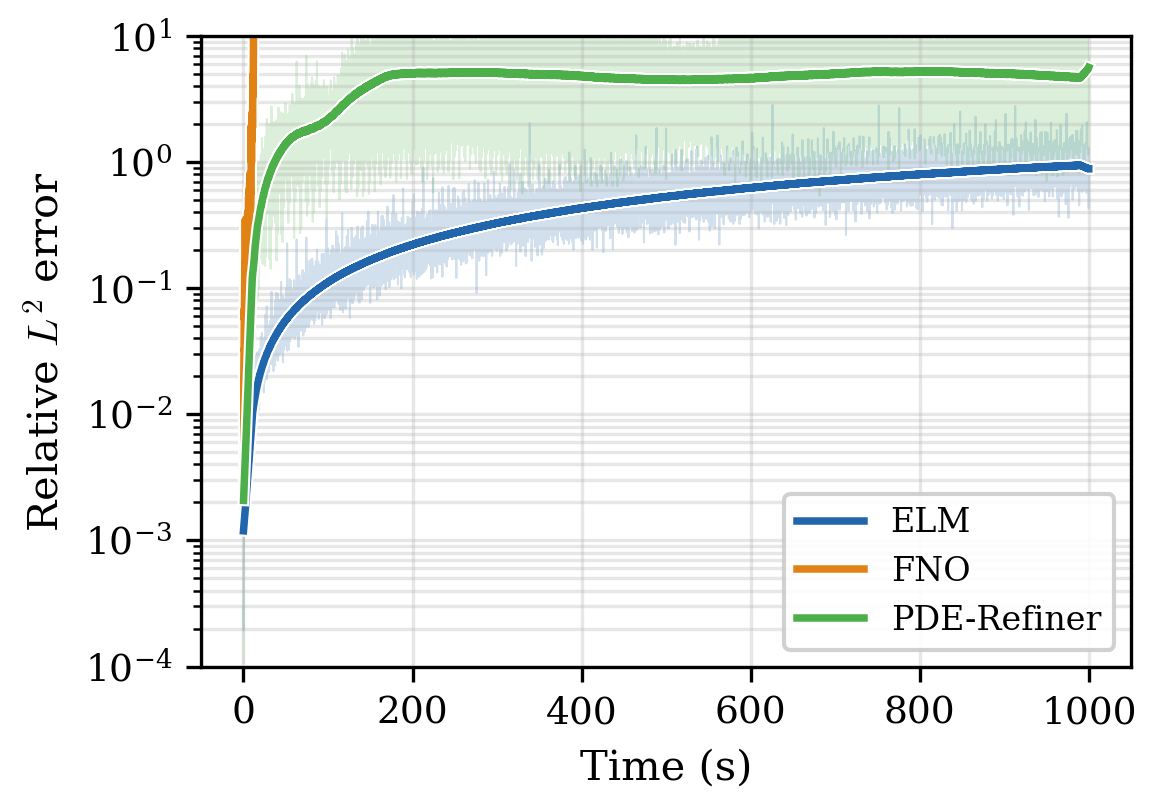}
    \caption{Relative \(L^2\) error vs.\ eigenmode ground truth for the 2D wave over \(1{,}000\)\,s for the
    same system and methods as \cref{fig:wave_2d_300s}.}
    \label{fig:wave_2d_l2}
\end{figure}

\FloatBarrier
\section{Boundary Condition Implementations}\label{app:bcs}

The experiments in \cref{sec:results} use five boundary conditions across the 1D and 2D wave equations.
\Cref{tab:bcs} lists the mathematical condition imposed at each boundary node and the section in which the boundary condition appears.
In ELM, every boundary condition is realized by constraining a subset of the per-node variables before the Jacobi sweep; no boundary condition requires a change to the loss or to the integrator core.

We implement the absorbing boundary as a Mur boundary condition \citep{mur1981absorbing}.
This boundary requires an estimate of the wave speed in the medium.
For a homogeneous medium, the wave speed can be estimated from the learned model as
\begin{equation*}
    \hat{c}^{2} = -\left.\frac{\partial^{2}\mathcal{L}}{\partial q_x^{2}}\middle/\frac{\partial^{2}\mathcal{L}}{\partial q_t^{2}}\right|_{0}.
\end{equation*}
As the boundary condition is used only to prevent boundary reflections from distorting the qualitative interference pattern, this requirement is acceptable.

\begin{table}[h]
\centering
\caption{Boundary conditions used in the experiments.
For the periodic boundary, \(q_\star\) indicates that all optimized variables are shared.
For the Mur boundary, \(s = \pm 1\) denotes the outward-normal sign (\(+1\) for the right/top edges, \(-1\) for left/bottom).}
\label{tab:bcs}
\small
\begin{tabular}{@{}llll@{}}
\toprule
\textbf{Name} & \textbf{Condition} & \textbf{Used in} \\
\midrule
Periodic     & \(q_\star(t, x_0)=q_\star(t, x_N)\)   & \cref{sec:exp_wave} (1D wave) \\
Dirichlet    & \(q=0\)                   & \cref{sec:exp_2d}, slit barrier in \cref{sec:exp_boundary} \\
Neumann      & \(q_x=0\)                 & \cref{sec:exp_boundary} (1D impedance edges) \\
Driven       & \(q=A\sin(\omega t)\)     & \cref{sec:exp_boundary} (driven edge of double-slit) \\
Mur          & \(q_t + s\,c\,q_x=0\)     & \cref{sec:exp_boundary} (other three edges of double-slit) \\
\bottomrule
\end{tabular}
\end{table}

%% file: computational_cost_table.tex

\begin{table*}[ht]
\centering
\caption{Configuration, computational cost, and final energy drift for each method in \cref{sec:results}. 
Methods sharing a letter in the \textbf{Net} column use the same architecture and the same trained weights.
More details on architecture and training can be found in \cref{app:training}.
Computational cost excludes JIT compilation. 
The double pendulum experiments run on CPU (float64) while wave experiments run on GPU (float32).
}
\label{tab:cost}
\small
\begin{tabular}{@{}lccccrrrr@{}}
\toprule
\textbf{Method} & \textbf{Net} & \textbf{Params} & \textbf{Nodes} & \textbf{\(\Delta t\)} & \textbf{Steps} & \textbf{ms/step} & \textbf{Total} & \textbf{\(|\Delta E/E_0|\)} \\
\midrule
\multicolumn{9}{@{}l}{\textit{Double pendulum (100{,}000\,s; ELM \(n_r{=}10\), \(n_\omega=2\))}} \\
\addlinespace[2pt]
RK4               & \textbf{A} & \multirow{5}{*}{163{,}604} & --- & 0.02     & 5{,}000{,}000 & 0.18  & 924\,s       & 77\%     \\
Dormand-Prince    & \textbf{A} &                            & --- & adaptive & ---           & ---   & 944\,s       & 17\%     \\
Implicit Midpoint & \textbf{A} &                            & --- & 0.02     & 5{,}000{,}000 & 0.41  & 2{,}072\,s   & 3.4\%    \\
GLRK              & \textbf{A} &                            & --- & 0.02     & 5{,}000{,}000 & 0.76  & 3{,}815\,s   & 0.065\%  \\
ELM               & \textbf{A} &                            & --- & 0.02     & 5{,}000{,}000 & 2.4   & 11{,}977\,s  & 0.045\%  \\
\midrule
\multicolumn{9}{@{}l}{\textit{1D wave (periodic boundaries; 10{,}000\,s; ELM \(n_r=3\), \(n_\omega=3\text{ in time and }6\text{ in space.}\))}} \\
\addlinespace[2pt]
LNN \(\Delta t{=}0.4\)  & \textbf{B} & \multirow{2}{*}{324{,}001} & 100 & 0.4 & 25{,}000  & 5.7 & 141\,s      & 680\%      \\
LNN \(\Delta t{=}0.1\)  & \textbf{B} &                            & 100 & 0.1 & 100{,}000 & 5.5 & 549\,s      & 1{,}680\%  \\
\addlinespace[2pt]
ELM \(n{=}21\)          & \textbf{C} & \multirow{3}{*}{322{,}801} & 21  & 0.4 & 25{,}000  & 9.4 & 235\,s      & 21\%       \\
ELM \(n{=}31\)          & \textbf{C} &                            & 31  & 0.2 & 50{,}000  & 9.6 & 481\,s      & 5.2\%      \\
ELM \(n{=}51\)          & \textbf{C} &                            & 51  & 0.1 & 100{,}000 & 15  & 1{,}500\,s  & 1.9\%      \\
\midrule
\multicolumn{9}{@{}l}{\textit{2D wave (Dirichlet boundaries, 1{,}000\,s; ELM \(n_r=20\), \(n_\omega=3\text{ in time and }5 \times 5\text{ in space.}\))}} \\
\addlinespace[2pt]
ELM         & \textbf{D} & 3{,}521      & \(31{\times}31\) & 0.02 & 50{,}000 & 968 & 48{,}400\,s & 19\%          \\
PDE-Refiner & \textbf{E} & \({\sim}\)146M & \(64{\times}64\) & 0.02 & 50{,}000 & 26  & 1{,}300\,s  & 271\%        \\
FNO         & \textbf{F} & 138{,}301    & \(64{\times}64\) & 0.02 & 50{,}000 & 2.9 & 145\,s      & diverges\textsuperscript{\textdagger} \\
\bottomrule
\end{tabular}
\vspace{4pt}

\noindent
\textsuperscript{\textdagger} NaN at \(t = 21.9\)\,s; energy exceeds \(10^{36}\) before divergence.
\end{table*}